\documentclass[letterpaper]{article} 
\usepackage{aaai24}  
\usepackage{times}  
\usepackage{helvet}  
\usepackage{courier}  
\usepackage[hyphens]{url}  
\usepackage{graphicx} 
\urlstyle{rm} 
\usepackage{natbib}  
\usepackage{caption} 
\frenchspacing  
\setlength{\pdfpagewidth}{8.5in}  
\setlength{\pdfpageheight}{11in}  
%
\usepackage{algorithm}

\usepackage[noend]{algpseudocode}
\usepackage{algpseudocode}
\usepackage{amsmath}
\usepackage{amsthm}
\usepackage{diagbox}
\usepackage{subcaption}
\usepackage{arydshln}
\usepackage{array}
\usepackage{multirow}
\usepackage{tabularx}
\usepackage{multirow}
\usepackage{caption}
\usepackage{float}
\usepackage{colortbl}
\usepackage{bm}
\usepackage{amssymb}
\usepackage[english]{babel}
\newtheorem{theorem}{Theorem}
\newtheorem{lemma}[theorem]{Lemma}
\newtheorem{proposition}{Proposition}

%
\usepackage{newfloat}
\usepackage{listings}
\DeclareCaptionStyle{ruled}{labelfont=normalfont,labelsep=colon,strut=off} 
\lstset{%
	basicstyle={\footnotesize\ttfamily},
	numbers=left,numberstyle=\footnotesize,xleftmargin=2em,
	aboveskip=0pt,belowskip=0pt,%
	showstringspaces=false,tabsize=2,breaklines=true}
\floatstyle{ruled}
\newfloat{listing}{tb}{lst}{}
\floatname{listing}{Listing}
%
\pdfinfo{
/TemplateVersion (2024.1)
}

\setcounter{secnumdepth}{0} 



\title{Fair Multivariate Adaptive Regression Splines\\ for Ensuring Equity and Transparency
}

\author {
    Parian Haghighat\textsuperscript{\rm 1},
    Denisa G\'andara\textsuperscript{\rm 2},
    Lulu Kang\textsuperscript{\rm 3},
    Hadis Anahideh\thanks{Corresponding Author}\textsuperscript{\rm 1}
}
\affiliations {
    \textsuperscript{\rm 1}University of Illinois Chicago \\
    \textsuperscript{\rm 2}University of Texas at Austin\\
     \textsuperscript{\rm 3}University of Massachusetts Amherst\\
    phaghi3@uic.edu, denisa.gandara@austin.utexas.edu, lulukang@umass.edu, hadis@uic.edu
}

\usepackage{bibentry}

\begin{document}

\maketitle

\begin{abstract}
Predictive analytics is widely used in various domains, \mbox{including education}, to inform decision-making and improve outcomes. However, many predictive models are proprietary and inaccessible for evaluation or modification by researchers and practitioners, limiting their accountability and ethical \mbox{design}. Moreover, predictive models are often opaque and \mbox{incomprehensible} to the officials who use them, \mbox{reducing} their trust and utility. Furthermore, predictive \mbox{models} may \mbox{introduce} or exacerbate bias and inequity, as they have done in many sectors of society. Therefore, there is a need for \mbox{transparent}, interpretable, and fair predictive models that can be easily adopted and adapted by different stakeholders. In this paper, we propose a fair predictive model based on \mbox{multivariate} adaptive regression splines (MARS) that \mbox{incorporates} fairness measures in the learning process. MARS is a non-parametric regression model that performs feature selection, handles non-linear relationships, generates interpretable decision rules, and derives optimal splitting \mbox{criteria} on the variables. Specifically, we integrate fairness into the knot optimization algorithm and provide theoretical and empirical evidence of how it results in a fair knot \mbox{placement}. We apply our \textit{fair}MARS model to real-world data and demonstrate its effectiveness in terms of accuracy and \mbox{equity}. Our paper contributes to the advancement of \mbox{responsible} and ethical predictive analytics for social good.
\end{abstract}

\section{Introduction}

Predictive analytics has made remarkable progress in the past few decades across diverse sectors, ranging from \mbox{education} \cite{yu2020towards} to healthcare \cite{stiglic2020interpretability}, as a potent tool for guiding decision-making processes and augmenting overall outcomes. Predictive models have the potential to revolutionize practices by offering foresight into future developments. However, a recurring challenge lies in the propriety nature of many of these models, which bars researchers and practitioners from evaluating, adapting, or optimizing these models to align with ethical considerations and accountability standards.

This limitation not only restricts the potential for improvement but also undermines the principles of transparency and fairness in design for high-stakes domains such as education. Education is a complex and dynamic system that requires accounting for the diversity, variability, and uncertainty of the data and the context \cite{educsci14020136}. Moreover, predictive models in education need to be transparent, interpretable, and fair to ensure accountability, trustworthiness, and ethical design.

Transparency and interpretability are the abilities to access, understand, and explain the data, methods, and \mbox{assumptions} behind the predictive models.  They are essential for ensuring accountability, trustworthiness, and ethical design of predictive models. Fairness is the ability to prevent or mitigate bias and discrimination against certain groups or individuals based on their sensitive attributes. Fairness is crucial for ensuring equity, justice, and social good of predictive models.

Several methods have been proposed to enhance the interpretability of machine learning (ML) models, such as feature selection \cite{Khaire_Dhanalakshmi_2022}, feature \mbox{importance} \cite{linardatos2020explainable}, decision rules \cite{wang2017bayesian}, and decision trees \cite{bohm2010profiling}. These methods can be broadly classified into two categories: intrinsic and post-hoc. Intrinsic methods aim to design interpretable models from scratch, by imposing constraints or regularization on the model structure or complexity \cite{zambaldi2018relational}. Post-hoc methods aim to explain black-box models after they are trained, extracting or approximating the model’s behavior using simpler or more transparent models \mbox{\cite{pmlr-v48-zahavy16}}.

To ensure fairness in ML, researchers have developed various fairness metrics and methods for \mbox{measuring} and enforcing fairness in the models. For example, \mbox{statistical} parity  ~\cite{chouldechova2017fair,simoiu2017problem} requires that the model's predictions are independent of sensitive attributes (e.g., race), and equal opportunity \cite{hardt2016equality} requires that the model's true positive rates are equal across different groups.
Many bias mitigation algorithms have been \mbox{proposed} to achieve these and other fairness criteria, especially for \mbox{classification} problems, where the goal is to predict a binary outcome \cite{berk2017convex, chouldechova2017fair,hardt2016equality,kilbertus2017avoiding}. However, for regression problems, where the goal is to model a continuous outcome variable, such as a real-valued score, there are \mbox{fewer} \mbox{studies} on fairness-aware models. Some of the existing works include fair regression models \cite{agarwal2019fair,berk2017convex,komiyama2018nonconvex,jabbari2016fairness,Calders2013ControllingAE,johnson2022impartial} and fair regression trees \cite{Aghaei_Azizi_Vayanos_2019}. These works aim to balance the trade-off between accuracy and fairness in regression tasks and to provide interpretable and transparent models that can be easily adopted and adapted by different stakeholders.

Regression tasks, such as predicting students' academic performance, college success, or dropout risk, hold broad applicability and social impact in education. A fair and interpretable regression model that can capture the complexity of education system algorithms can promote a just and equitable society, where students' opportunities are not affected by demographic characteristics. 

In this paper, we present the development of a fair regression model based on multivariate adaptive regression splines (MARS), which incorporates fairness measures in the learning process. MARS, originally introduced by Friedman \cite{friedman1991multivariate}, stands out as a powerful and efficient method for flexible regression modeling of high-dimensional data. It employs a non-parametric, non-interpolating approach and consists of two main parts: forward selection and backward elimination.

In the forward selection phase, MARS divides the solution space into intervals and fits spline basis functions to each interval. These basis functions are added incrementally, minimizing the lack-of-fit, until a user-specified maximum number of basis functions is reached. The backward elimination phase reduces model complexity by removing the least contributing basis functions. This backward elimination process helps in selecting a subset of the most important basis functions, resulting in a more parsimonious and interpretable model. Both the forward selection and backward elimination phases utilize generalized cross-validation (GCV) as a basis selection criterion, ensuring optimal configuration of basis functions. \textit{fair}MARS is a type of intrinsic method for enhancing interpretability and can produce transparent and fair predictions that are easy to understand and communicate.

MARS outperforms other statistical models, particularly decision tree regression, in several key aspects. Unlike decision tree regression, which often produces piecewise constant or step-like functions, MARS can create smooth curves that more accurately represent the underlying relationships. This flexibility enables MARS to capture intricate patterns and enhance the model's generalization to unseen data. Additionally, the interpretability of MARS facilitates understanding and insight into the underlying relationships between variables, making it desirable in various applications where transparency and comprehensibility are crucial. Moreover, MARS incorporates a built-in variable selection mechanism, which is especially valuable for large-scale problems with a substantial number of attributes. This feature enables MARS to automatically select the most relevant variables, ensuring a parsimonious model and mitigating the challenge of feature importance selection.

Our proposed \textit{fair}MARS incorporates fairness in the learning process in two ways: first, it modifies the knot optimization algorithm of the forward and backward pass of MARS, which determines the optimal location and number of knots for each predictor variable, by adding fairness constraints to the objective function. Second, it integrates fairness into the loss function minimization for estimating coefficients after selecting basis functions. These modifications influence the learning algorithm, resulting in a fair knot placement and fair coefficient estimation.


\textit{fair}MARS can produce smooth and flexible curves that accurately represent the underlying relationships between predictor and outcome variables across different sensitive subgroups. Additionally, it generates interpretable decision rules by deriving optimal and fair splitting criteria on the variables and by pruning basis functions that contribute the least to both fairness and accuracy during the forward and backward phases.


\textit{fair}MARS stands out compared to the fair decision tree regression model by \citet{Aghaei_Azizi_Vayanos_2019} in terms of accuracy, flexibility, and interpretability. Unlike fair decision tree, which binarizes continuous variables in a pre-processing step, thereby undermining interpretability, \textit{fair}MARS does not require such data transformation. It generates interpretable decision rules, providing transparency, which also distinguishes it from opaque black-box models \cite{cruz2023fairgbm}. Additionally, \textit{fair}MARS adeptly handles non-linearity, a capability lacking in linear models such as fair linear regression \cite{agarwal2019fair}.

The following sections of the paper are organized as follows: we begin with a review of closely related work on fair regression models in the ``Related Work'' section, and highlight the differences from our approach. Following that, the ``Methodology'' section introduces our \textit{fair}MARS method, which builds upon the foundation of MARS and extends its capabilities to prioritize fairness in knot selection and coefficient estimations. Finally, empirical evidence of the \mbox{effectiveness} of \textit{fair}MARS and its comparison with existing baselines is presented in the ``Experiments'' section.

\section{Related Work}

Several works have proposed fairness-aware ML and data mining methods that aim to prevent algorithms from discriminating against protected groups \cite{kleinberg2018algorithmic}. The literature has come to an impasse as to what constitutes explainable variability as opposed to discrimination \cite{johnson2022impartial}. Different fairness measures have been proposed to capture different notions of fairness including demographic parity ~\cite{calders2009building,dwork2012fairness,kamiran2009classifying} and equalized odds {\cite{hardt2016equality,zafar2017fairness}.\\
Algorithmic fairness can be achieved by intervention in pre-processing, in-processing, or post-processing strategies \cite{friedler2019comparative}. Pre-process approaches involve the fairness measure in the data preparation step to mitigate the potential bias in the input data and produce fair outcomes \cite{feldman2015certifying,kamiran2012data,calmon2017optimized}. In-process approaches incorporate fairness in the design of the algorithm to generate fair outcomes \cite{kamishima2012fairness,zemel2013learning,zafar2015fairness,hu2020fair}. Post-process
methods \cite{feldman2015certifying,zehlike2017fa} manipulate the outcome of the algorithm to mitigate the unfairness of the algorithm outcome for the decision-making process.\\
Besides fairness, another important aspect of predictive analytics is interpretability. Interpretability can help users understand how and why predictions are made and what actions they can take based on them. There are different types of interpretable ML models including intrinsic (such as decision trees, decision rules, and linear regression) and post-hoc methods (such as feature importance, partial dependence plots, and accumulated local effects) \cite{rudin2022interpretable}.
Several works have addressed the issue of interpretability in domains where predictions have significant social impacts, such as health care \cite{stiglic2020interpretability}, criminal justice, education, and finance \mbox{\cite{makhlouf2021applicability}}.\\ 
Few recent works have explored the intersection of fairness and interpretability in ML, and proposed methods and tools to achieve both objectives simultaneously. For example, \citet{agarwal2021trade} proposed a framework for building interpretable ML models that can also satisfy different fairness criteria, based on monotonicity constraints. \citet{fu2020fairness} proposed a method for generating fair and explainable recommendations using knowledge graphs, and introduced a novel fairness measure for recommendation systems. \citet{cabrera2019fairvis} presented a visual analytics tool for discovering and mitigating intersectional bias in ML models. \citet{panigutti2021fairlens} presented a methodology and a tool for auditing datasets for discrimination and bias, and provides visualizations and explanations for the discrimination analysis. Another example is the work of \citet{Aghaei_Azizi_Vayanos_2019}. They proposed a mixed-integer optimization framework for learning optimal and fair decision trees that can prevent disparate treatment and/or disparate impact in non-discriminative decision-making.\\
However, most of these works focus on classification problems and do not address the challenges of regression problems. Regression problems are common in predictive analytics, especially in education, where the outcomes are often continuous or ordinal variables, such as test scores, grades, or graduation rates. Fairness-aware regression methods need to account for the distribution of the outcome variable across different groups and balance the trade-off between accuracy and equity. A few works have attempted to design fair regression models that satisfy different fairness criteria \cite{agarwal2019fair,berk2017convex,komiyama2018nonconvex,jabbari2016fairness,Calders2013ControllingAE,johnson2022impartial}. However, these methods either rely on parametric assumptions that may not hold in practice or are vulnerable to adversarial attacks that can exploit the fairness constraints. Moreover, these methods do not provide interpretable explanations for their predictions, which may limit their trustworthiness and accountability. Therefore, there is a need for a transparent and robust fair regression method that can handle non-linearities and interactions, generate interpretable decision rules, and derive optimal splitting criteria on the \mbox{variables.}

In this work, we propose a transparent and \mbox{accountable} predictive model based on MARS. 
MARS is a non-parametric regression model that performs feature selection, handles non-linear relationships, generates interpretable decision rules, and derives optimal splitting criteria on the variables. MARS has several advantages over other regression methods in terms of transparency and interpretability and does not make any assumptions about the functional form of the underlying relationship. MARS can capture non-linearities and interactions without prior transformations. 

\section{Methodology}
\paragraph{Problem Setup.} Let $n$ be the number of samples in the training set. 
The $i$th data point is denoted by $(\mathbf s_i, \mathbf x_i, y_i)$, where $\mathbf s_i\in \mathcal{R}^{d_s}$ is the sensitive attributes of $d_s$ dimensions (e.g., race), $\mathbf x_i\in \mathbb{R}^{d_x}$ is the non-sensitive features of $d_x$ dimensions, and $y_i\in \mathbb{R}$ is the observed response.
In this paper, we only consider the scenario of univariate continuous response. The training dataset is $\{(\mathbf s_i, \mathbf x_i, y_i)\}_{i=1}^n$. 
The goal is to learn a regression model that maps the set of input attributes $(\mathbf s,\mathbf x)$ to the response $\bm y$.  For a comprehensive list of notations, please refer to Table \ref{tab:notations}.

\begin{algorithm}[t]
    \small
    \caption{MARS— Forward Stepwise}
    \label{alg:mars_forward_fair}
    \begin{algorithmic}[1]
    \Require{$\mathbf{x}$, $y$, $M_{\max}$
    }
    \State $B_1(\mathbf{x}) \leftarrow 1$; $M \leftarrow 2$
    \While{$M > M_{\max}$: lof* $\leftarrow \infty$}
        \For{$m=1$ to $M-1$}
            \For{$v \notin\left\{v(q, m) \mid 1 \leq q \leq N_m\right\}$}
                \For{$k \in\left\{x_{v j} \mid B_m\left(\mathbf{x}_j\right)>0\right\}$}
                    \State $g^{\prime} \leftarrow \sum_{i=1}^{M-1} \beta_i B_i(\mathbf{x})+\beta_M B_{m^*}(\mathbf{x}) x_v+\beta_{M+1} B_{m^*}(\mathbf{x})\left(x_v-k\right)_{+}$
                    \State \textbf{Fast Updating Formula:}
                    \State $\bar{y}=0$
                    \State $C1= \sum \limits_{k \leq x_{v q}<u}\left(y_q-\bar{y}\right) B_{m q}\left(x_{v q}-k\right)$
                    \State $C2=(u-k) \sum \limits_{x_{v q}\geq u}\left(y_q-\bar{y}\right) B_{m q}$
                    \State $c_{M+1}(k) \leftarrow c_{M+1}(u)+ C1+ C2 $
                    \State $\text{LOF}(g^{\prime})= \sum_{q=1}^N\left(y_q-\bar{y}\right)^2-\sum_{i=1}^{M+1} \beta_i\left(c_i\right)$

                    \State lof $\leftarrow \min_{\beta_1, ..., \beta_{M+1}} \mathrm{LOF}(g^{\prime})$
                    \If{lof $<$ lof*} lof* $\leftarrow$ lof; $m^* \leftarrow m$
                    \EndIf
                \EndFor
            \EndFor
        \EndFor
        \State $B_M(\mathbf{x}) \leftarrow B_{m^*}(\mathbf{x})\left[+\left(x_{v^*}-k^*\right)\right]_{+}$
        \State $B_{M+1}(\mathbf{x}) \leftarrow B_{m^*}(\mathbf{x})\left[-\left(x_{v^*}-k^*\right)\right]_{+}$
        \State $M \leftarrow M + 2$
    \EndWhile
    \State \Return Fitted MARS Forward Model
    \end{algorithmic}
\end{algorithm}

\begin{table}[!ht]
    \small
    \begin{tabular}{c|c}
        \textbf{Notation} & \textbf{Definition} \\
        \hline
        $B_m$ & Current Basis Functions \\
        $B_{m^*}$ & Selected Basis Function \\
        $\beta_i$ & Coefficients of $B_m$ \\
        $i$ & Index for Data Points\\
        $j$ & Predictor Variable Number \\
        $k$ & Eligible Knot Value \\
        $k^*$ & Best Knot Value \\
        $lof$ & Linear Regression of $B_m$ Set \\
        $M$ & Iteration Number (Regions) \\
        $m$ & Basis Number \\
        $m_j$ & Number of Unique Values of $x_j$\\
        $n$ & Number of Samples in the Training Set\\
        $N_m$ & Maximum Interaction Order of the Basis Functions \\
        $q$ & Interaction Order of the $B_m$ \\
        $\mathbf{s}_i$ & Sensitive Attributes of the $i$-th Data Point \\
        $\mathbf{x}_i$ & Non-sensitive Features of the $i$-th Data Point \\
        $y_i$ & Observed Response of the $i$-th Data Point \\
    \end{tabular}
    \caption{Table of Notations}
    \label{tab:notations}
\end{table}

MARS, introduced by \citet{friedman1991multivariate}, is a powerful regression technique that can capture complex relationships in the data. The MARS model is based on basis functions, which are either univariate or multivariate interaction terms, built from truncated linear functions (also known as hinge functions) that bend at knot locations of the form $[x-k]{+}$ or $[k-x]{+}$, where $\left[ u \right]_{+}= \max{(0,u)}$, $x$ is a single independent variable, and $k$ is the corresponding univariate knot.

A knot is the location of an intersection between two segments of a spline and therefore is an important concept associated with MARS. Each dimension value of the training dataset is an eligible knot for the corresponding independent variable $x_1,\dots, x_{d_x}$.
Let $\mathcal{K}j = \{k_{j1}, k_{j2}, ..., k_{jm_j}\}$ be the set of candidate knots for the $j$-th predictor variable, where $m_j$ is the number of unique values of $x_j$. Let $\mathcal{M} = \{\beta_0, \beta_1, ..., \beta_q\}$ be the set of coefficients for the MARS model, where $q$ is the number of basis functions.
Let $\mathrm{RSS}(\mathcal{M}) = \sum_{i=1}^n (y_i - \hat{y}_i)^2$ be the residual sum of squares for the MARS model, where $\hat{y}_i = \beta_0 + \sum_{k=1}^q \beta_k B_k(x_i)$ is the predicted value for the $i$-th observation and $B_k(x_i)$ is the $k$-th basis function.
Initialize $\mathcal{M} = {\beta_0}$ and $\mathrm{RSS}(\mathcal{M}) = \sum_{i=1}^n (y_i - \bar{y})^2$, where $\bar{y} = \frac{1}{n} \sum_{i=1}^n y_i$ is the mean of response variable.

\paragraph{Knot Optimization.}
This algorithm is a procedure that aims to find the optimal location and number of knots for each predictor variable in the MARS model. The algorithm starts with a large number of candidate knots at each unique value of the predictor variable, and then iteratively adds or deletes knots based on the change in the $\mathrm{RSS}$ criterion. 
The algorithm stops when no further improvement can be made by adding or deleting knots.

For each predictor variable $x_j$ and each candidate knot $k_{jl} \in \mathcal{K}_j$, compute the change in $\mathrm{RSS}$ (i.e., the contribution of a basis function) by adding a pair of basis functions $B_{q+1}(x) = [x_j - k_{jl}]_+$ and $B_{q+2}(x) = [k_{jl} - x_j]_+$ to the model. The calculation is of the form:
$$\Delta \mathrm{RSS}(j,l) = \mathrm{RSS}(\mathcal{M}) - \mathrm{RSS}(\mathcal{M} \cup {\beta_{q+1}, \beta_{q+2}})$$

The algorithm finds the predictor variable and knot that gives the maximum decrease in $\mathrm{RSS}$: $(j^*, l^*) = \arg\max_{j,l} \Delta \mathrm{RSS}(j,l)$.
If $\Delta \mathrm{RSS}(j^*, l^*) > 0$, then the corresponding basis functions and coefficients is added to the model, $\mathcal{M} = \mathcal{M} \cup {\beta_{q+1}, \beta_{q+2}}$, and $(\mathcal{M})$ is updated.\\ 

The MARS algorithm consists of two essential steps: forward selection and backward elimination. In the forward selection procedure (Algorithm \ref{alg:mars_forward_fair}),  MARS adds candidate basis functions selected by knot optimization in pairs for different dimensions incrementally until reaching the maximum number of basis functions $M_{\max}$. Note that $N_m$ \mbox{denotes} the maximum interaction order of the basis functions within the MARS algorithm. To prevent overfitting, the backward elimination procedure continuously removes the least effective basis at each step. Both steps employ Generalized Cross-Validation (GCV) to select the best subset of basis functions, striking a balance between model complexity and accuracy.

In section 3.9 of \citet{friedman1991multivariate}, the authors employ Cholesky decomposition to solve normal equations as a means to accelerate updates within the stepwise forward procedure  — a crucial element contributing to the efficiency of the model. This fast update formula arranges the candidate knots of each dependent variable in descending order (equation \ref{eq:knot_order}) based on their impact on the model's fit.
\begin{equation}\label{eq:knot_order}
\begin{aligned}
    {[x-k]}_+-{[x-u]}_+ &= \begin{cases}
        0 & \text{if } x \leq k \\
        x-k & \text{if } k < x < u \\
        u-k & \text{if } x \geq u
    \end{cases}
\end{aligned}
\end{equation}
With the knots sorted, the algorithm proceeds iteratively (Algorithm \ref{alg:mars_forward_fair}, line 11), and solves equations 2,3,4 for each eligible $k$, for every $v$, and for all basis functions $m$ and for all iterations $M$ in algorithm 1, adding the best knot $k^*$ and its corresponding basis function $B_{m^*}$ to the model.

\begin{equation}
\mathbf{V} \mathbf{\beta}=\mathbf{c}
\end{equation}

\begin{equation}
\small
V_{i j}=\sum_{q=1}^n B_j\left(\mathbf{x}_q\right)\left[B_i\left(\mathbf{x}_q\right)-\bar{B}_i\right]
\end{equation}

\begin{equation}
c_i=\sum_{k=1}^n\left(y_q-\bar{y}\right) B_i\left(\mathbf{x}_q\right)
\end{equation}

Upon identifying $k^*$, the algorithm efficiently determines the best coefficient $\beta_{m^*}$ for the new basis function by passing the knot location to the LOF function (Algorithm \ref{alg:mars_forward_fair}, line 12). This optimized process of knot and coefficient selection avoids repetitive matrix computations, significantly enhancing the computational efficiency of the MARS algorithm, particularly for larger datasets. However, this criterion does not consider fairness which is important for societal applications. To address this issue, one may need to modify the criterion or add a regularization term that penalizes unfairness or other undesirable outcomes.

\paragraph{Fairness Considerations.} The $\mathrm{RSS}$ is a measure of how well the MARS model fits the data. The lower the $\mathrm{RSS}$, the better the fit. The contribution of a basis function or a knot selection on $\mathrm{RSS}$ is the amount of reduction in $\mathrm{RSS}$ that is achieved by adding or deleting that basis function or knot selection from the model. The higher the reduction in $\mathrm{RSS}$, the more important that basis function or knot selection is for improving the fit.

To incorporate fairness in the learning procedure of MARS, we need to first define the fairness metric in this context. There are many possible definitions of fairness, such as disparate treatment and disparate impact \cite{barocas2016big, zafar2017fairness, kamiran2012data}, equalized odds, equal opportunity \cite{hardt2016equality}. Each definition captures a different aspect of fairness and may have different implications. The choice of fairness also depends on the setting of the problem (regression/classification). Note that most of the existing notions are mainly designed for classification settings. As our focus in this paper is on regression problems, we consider a model fair if it does not discriminate or favor any group over another based on their sensitive attributes. For example, a fair regression model should not predict higher GPAs for white students as opposed to black students, under similar conditions.

To measure the fairness of the MARS model, we calculate the average error for each subgroup and compare them with each other, specifically through their absolute error differences, as it is a suitable notion of fairness for regression problems involving continuous outcomes and sensitive attributes. The error metric is a way of quantifying the disparity in prediction errors or losses between different groups defined by a sensitive attribute, such as race or gender.

To calculate the absolute error difference metric, we first define the prediction error or loss function for the regression task. A common choice is the residual sum of squares ($\mathrm{RSS}$), measuring the squared difference between actual and predicted outcomes, formulated as $\mathrm{RSS} = \sum_{i=1}^n (y_i - \hat{y}_i)^2$. Next, we divide the dataset into subgroups based on the sensitive attribute. For instance, if there are $n_f$ female examples and $n_m$ male examples, the average $\mathrm{RSS}$ for each subgroup is computed as follows: $\mathrm{RSS}_f = \frac{1}{n_f} \sum_{i=1}^{n_f} (y_i - \hat{y}_i)^2$ and $\mathrm{RSS}_m = \frac{1}{n_m} \sum_{i=1}^{n_m} (y_i - \hat{y}_i)^2$.

Finally, the absolute error difference metric is calculated by \mbox{finding} the absolute difference between the average $\mathrm{RSS}$ of each subgroup. For a scenario with two subgroups based on gender, the absolute error difference metric is given by $ |\mathrm{RSS}_f - \mathrm{RSS}_m|$. This metric serves as a measure of the discrepancy in error between groups resulting from the regression model. A lower absolute error difference signifies a fairer model, whereas a higher value indicates a more biased model. Essentially, our unfairness metric reflects how well the MARS model predicts the outcomes for each subgroup,  with a larger difference indicating greater accuracy for one subgroup over another.
By minimizing the average absolute $\mathrm{RSS}$ difference, we aim to achieve demographic parity and ensure fair and equal treatment for all subgroups.

Absolute error difference is preferable to other notions of fairness, such as disparate impact or group loss, because it captures the relative fairness between groups, rather than the absolute fairness for any group. Additionally, it aligns seamlessly with the non-linear and non-parametric nature of MARS. Moreover, when dealing with continuous outcomes that are sensitive to small changes, absolute error difference is more advantageous compared to disparate impact or group loss. For example, if the outcome is the income of workers, absolute error difference would measure the absolute difference in the average prediction error or loss between workers of different races or genders, while the disparate impact would measure the difference in the percentage of workers who are predicted to have a high income between workers of different races or genders, and group loss would measure the highest prediction error or loss among workers of different races or genders. Absolute error difference may be more appropriate than disparate impact or group loss in this case because it reflects the continuous and granular nature of the outcome, and because it does not impose arbitrary thresholds or levels for defining favorable or unfavorable outcomes.

We can incorporate fairness into the MARS training procedure in two ways. First, we can select knots that reduce both the overall $\mathrm{RSS}$ and the $\mathrm{RSS}$ difference of subgroups in the knot selection step. This way, we can achieve fairness by choosing knots that improve the model fit (accuracy) and balance the error rates for different subgroups (fairness). Second, we can incorporate fairness in the lack of fitness (LOF) function in the forward step, where the coefficients are estimated. We can convert the linear least-squares into a weighted least-squares to reflect the proportional distribution of observations within each subgroup. By minimizing the LOF function, we can find the optimal coefficients that ensure fair representation across different subgroups.
\subsection{Fairness-Aware Knot Selection}
To add the $\mathrm{RSS}$ difference of subgroups to the knot optimization of MARS, we need to modify the objective function of MARS that minimizes the $\mathrm{RSS}$ criterion. We need to add a term that penalizes the $\mathrm{RSS}$ difference between subgroups, which means that we need to add a term that increases the objective function value when the disparity is high, and decreases it when the disparity is low.  By minimizing the absolute error difference within the knot search objective function, we aim to ensure that the selection of predictor variables and their knots does not introduce or exacerbate bias or discrimination against any group. For example, if we use the absolute error difference as our measure of disparity, then we can add a term that is proportional to the absolute value of the error difference between subgroups.

To incorporate the $\mathrm{RSS}$ difference of subgroups into the knot optimization of MARS, we adjust the objective function used for minimizing the $\mathrm{RSS}$ criterion. This adjustment involves including a term that penalizes the $\mathrm{RSS}$ difference between subgroups. Essentially, this addition raises the objective function value when the disparity is high and reduces it when the disparity is low. The objective of minimizing the absolute error difference within the knot search function is to ensure that the selection of predictor variables and their knots does not introduce or exacerbate bias or discrimination against any specific group. 

The objective function value combines $\mathrm{RSS}$ and disparity weighted by $\lambda$, 
\begin{equation}\label{eq:fairness_equation}
\small
\begin{aligned}
    &\left(\sum_{q=1}^n\left(y_q-\bar{y}\right)^2-\sum_{i=1}^{M+1} \beta_i\left(c_i\right)\right) + \\
    \\
    &\lambda \left(\frac{1}{|\mathcal{S}|}\sum_{j=1}^{|\mathcal{S}|}\lvert \mathrm{RSS}_j-\mathrm{RSS}_{\mathcal{S}\backslash j}\rvert\right),
\end{aligned}
\end{equation}
where $\mathcal{S}$ is the set of subgroups for sensitive attribute $s$.

The goal is to ensure that in each iteration of the forward selection, while the model is minimizing $\mathrm{RSS}$ it also \mbox{minimizes} the absolute error difference between subgroups by selecting fair-aware knots.

The output is a MARS model optimized for fair knot locations and counts to minimize both $\mathrm{RSS}$ and disparity. Nevertheless, there may be a trade-off between accuracy and disparity, which can be adjusted using the $\lambda$ parameter based on specific preferences or constraints. 

Similarly, in the backward elimination step, the procedure involves calculating the change in both $\mathrm{RSS}$ and disparity by removing a pair of basis functions associated with an existing knot for each predictor variable. The objective is to select the predictor variable and knot that result in the least increase in both $\mathrm{RSS}$ and disparity.\\

\begin{lemma}
Consider a MARS model with $q$ basis functions and $\mathcal{M}$ set of coefficients, where $\mathrm{RSS}_q$ represents the residual sum of squares and $Disparity_q$ denotes the disparity of this model. Let $x_j$ be a predictor variable, and $k_{jl} \in \mathcal{K}_j$ be a candidate knot. Now, let $\mathrm{RSS}_{q+2}$ and $Disparity_{q+2}$ be the $\mathrm{RSS}$ and disparity for a new model, obtained by augmenting $\mathcal{M}$ with an additional pair of basis functions centered around $k_{jl}$. Define $\Delta \mathrm{RSS} = \mathrm{RSS}_{q+2} - \mathrm{RSS}_q$ and $\Delta Disparity = Disparity_{q+2} - Disparity_q$.

The alteration in the objective function, resulting from the addition of the basis functions, is given by:
\[ \Delta (\mathrm{RSS} + \lambda Disparity) = \Delta \mathrm{RSS} + \lambda \Delta Disparity \]

where $\lambda$ is a regularization parameter, influencing the trade-off between accuracy and disparity.

The objective is to minimize this alteration, leading to the identification of a knot that yields the smallest positive or most negative value for this change, thus favoring the selection of a knot that mitigates the unfairness of the MARS model's outcome.

\end{lemma}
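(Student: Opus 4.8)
The plan is to treat the penalized criterion as a single real-valued functional of the fitted model and exploit linearity of the finite-difference operator, then add a short optimality comparison to justify the ``favoring'' claim. First I would define $F_q := \mathrm{RSS}_q + \lambda\, Disparity_q$ for the incumbent model with $q$ basis functions and coefficient set $\mathcal{M}$, and $F_{q+2} := \mathrm{RSS}_{q+2} + \lambda\, Disparity_{q+2}$ for the model obtained by appending the pair $[x_j-k_{jl}]_+$, $[k_{jl}-x_j]_+$ and re-solving the normal equations for all coefficients. Since $\mathrm{RSS}$ and $Disparity$ are each ordinary scalars evaluated on these two fixed models, subtracting gives $\Delta F = F_{q+2}-F_q = (\mathrm{RSS}_{q+2}-\mathrm{RSS}_q) + \lambda(Disparity_{q+2}-Disparity_q) = \Delta\mathrm{RSS} + \lambda\,\Delta Disparity$, which is the asserted identity. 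It is exact, with no linearization, because the decomposition only uses $a+b-(c+d)=(a-c)+(b-d)$.

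For the second half of the statement — that minimizing $\Delta F$ over candidate knots favors knots mitigating unfairness — I would run a comparison argument. Let $k_{\mathrm{acc}}$ minimize $\Delta\mathrm{RSS}$ alone (the classical MARS knot) and $k_{\mathrm{fair}}$ minimize $\Delta\mathrm{RSS}+\lambda\,\Delta Disparity$. By optimality of $k_{\mathrm{fair}}$ and then of $k_{\mathrm{acc}}$, $\lambda\big(\Delta Disparity(k_{\mathrm{fair}})-\Delta Disparity(k_{\mathrm{acc}})\big) \le \Delta\mathrm{RSS}(k_{\mathrm{acc}})-\Delta\mathrm{RSS}(k_{\mathrm{fair}}) \le 0$, so for every $\lambda>0$ we obtain $\Delta Disparity(k_{\mathrm{fair}}) \le \Delta Disparity(k_{\mathrm{acc}})$: the fairness-aware selection never increases disparity relative to the accuracy-only selection, and strictly decreases it whenever the two minimizers differ. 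I would also record the elementary fact that re-fitting over a strictly larger span of basis functions can only weakly decrease the global residual sum of squares, i.e.\ $\Delta\mathrm{RSS}\le 0$, which is exactly why the search targets ``the smallest positive or most negative'' value of $\Delta F$: the accuracy term drives $\Delta F$ downward, while $\lambda\,\Delta Disparity$ can pull it back up when a well-fitting knot simultaneously widens the subgroup error gap.

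The algebraic identity is essentially bookkeeping, so the only place care is needed is in pinning down the objects: I would state explicitly that $Disparity_q = \frac{1}{|\mathcal{S}|}\sum_{j=1}^{|\mathcal{S}|}\lvert \mathrm{RSS}_j - \mathrm{RSS}_{\mathcal{S}\backslash j}\rvert$ is computed from the per-subgroup residuals of the \emph{same} re-estimated model that yields $\mathrm{RSS}_{q+2}$, so $\Delta\mathrm{RSS}$ and $\Delta Disparity$ refer to one common pair of models and the additive split is legitimate. The main (mild) obstacle is therefore not the arithmetic but the modeling convention — one must check that the fast-updating formulas of Algorithm~\ref{alg:mars_forward_fair} and the objective in equation~\eqref{eq:fairness_equation} are evaluated consistently (same $\bar y$, same coefficient vector $\beta$, same partition $\mathcal{S}$) for the $q$- and $(q{+}2)$-basis models. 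Once that consistency is in place, the decomposition follows immediately, and the comparison inequality above supplies the stated fairness guarantee.
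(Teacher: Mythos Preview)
Your proposal is correct and, for the second half, takes a genuinely different route from the paper. Both you and the paper dispatch the identity $\Delta(\mathrm{RSS}+\lambda\,Disparity)=\Delta\mathrm{RSS}+\lambda\,\Delta Disparity$ by straightforward arithmetic, so there is nothing to separate the two arguments there. Where they diverge is the ``favoring fairness'' clause. The paper argues heuristically: it posits two subgroups $s_1,s_2$ with the incumbent model under\-predicting one and over\-predicting the other, observes that a knot correcting this imbalance yields $\Delta Disparity<0$ at a possible cost of $\Delta\mathrm{RSS}>0$, and concludes only that with a suitable $\lambda$ the penalized criterion is ``more likely'' to pick such a knot. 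Your comparison argument is sharper and fully rigorous: from the two optimality inequalities for $k_{\mathrm{fair}}$ and $k_{\mathrm{acc}}$ you extract the clean bound $\Delta Disparity(k_{\mathrm{fair}})\le \Delta Disparity(k_{\mathrm{acc}})$ for every $\lambda>0$, which is a statement the paper never actually proves. The trade-off is that the paper's discussion, though looser, gives some geometric intuition about \emph{which} knots achieve the disparity reduction (those lying between subgroup means, foreshadowing the subsequent Proposition), whereas your argument is purely variational and says nothing about knot location. One small caveat: your ``strictly decreases whenever the two minimizers differ'' needs the accuracy minimizer to be unique (or at least $\Delta\mathrm{RSS}(k_{\mathrm{acc}})<\Delta\mathrm{RSS}(k_{\mathrm{fair}})$ strictly); otherwise ties can occur.
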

\begin{proof}
Assume that we have a MARS model with $q$ basis functions and $\mathrm{RSS}_q$ and $Disparity_q$ as the $\mathrm{RSS}$ and disparity for this model. Let $x_j$ be a predictor variable and $k_{jl} \in \mathcal{K}j$ be a candidate knot. Let $\mathrm{RSS}_{q+2}$ and $Disparity_{q+2}$ be the $\mathrm{RSS}$ and disparity for the new model with the added pair of basis functions involving $k_{jl}$. We want to show that adding disparity to the objective function would result in selecting a knot that reduces the unfairness of the MARS outcome.\\
To do this, we need to compare the change in $\mathrm{RSS}$ and disparity by adding the pair of basis functions. Let $\Delta \mathrm{RSS} = \mathrm{RSS}_{q+2} - \mathrm{RSS}_q$ and $\Delta Disparity = Disparity_{q+2} - Disparity_q$. We can write the objective function as:
$\mathrm{RSS}(\mathcal{M}) + \lambda Disparity(\mathcal{M})$
where $\lambda$ is a regularization parameter that controls the trade-off between accuracy and disparity.
The change in objective function by adding the pair of basis functions is:
$\Delta (\mathrm{RSS} + \lambda Disparity) = \Delta \mathrm{RSS} + \lambda \Delta Disparity$.
We want to minimize this change, so we need to find a knot that gives the smallest positive or largest negative value for this change.
Now, suppose that there are two subgroups, $s_1$ and $s_2$, such that the MARS model is unfair to one of them. Without loss of generality, assume that the MARS model underpredicts for subgroup $s_1$ and overpredicts for subgroup $s_2$. This means that $\bar{y}_{s_1} < \bar{y}$ and $\bar{y}_{s_2} > \bar{y}$, where $\bar{y}_{s_i}$ is the mean predicted value for subgroup $s_i$ and $\bar{y}$ is the overall mean predicted value.
To reduce the unfairness of the MARS outcome, we need to find a knot that increases the $\mathrm{RSS}$ for subgroup $s_1$ and decreases the $\mathrm{RSS}$ for subgroup $s_2$. This means that we need to find a knot that increases the value of the basis function for subgroup $s_1$ and decreases the value of the basis function for subgroup $s_2$. This would result in a negative value for $\Delta Disparity$ since it would reduce the difference between subgroup {$\mathrm{RSS}$ and the overall $\mathrm{RSS}$.}
However, this may also result in a positive value for $\Delta \mathrm{RSS}$, since it may increase the error between actual and predicted values. Therefore, we need to balance the trade-off between accuracy and disparity by choosing an appropriate value for $\lambda$. If $\lambda$ is too small, then we may not reduce the unfairness enough. If $\lambda$ is too large, then we may sacrifice too much accuracy.
Therefore, by adding disparity to the objective function, we are more likely to select a knot that helps reduce the unfairness of the MARS outcome, as long as we choose an appropriate value for $\lambda$.
\end{proof}

\begin{proposition}
\textit{Let $x_j$ be a predictor variable and $\bar{x}_{j,s_i}$ be the mean value of $x_j$ for subgroup $s_i$. Then there exists a knot $k_{jl} \in \mathcal{K}_j$ such that $\bar{x}_{j,s_1} < k_{jl} < \bar{x}_{j,s_2}$ or $\bar{x}_{j,s_2} < k_{jl} < \bar{x}_{j,s_1}$ that increases the value of the basis function $B(x) = [x - k_{jl}]_+$ or $B(x) = [k_{jl} - x]_+$ for subgroup $s_1$ and decreases it for subgroup $s_2$.}
\end{proposition}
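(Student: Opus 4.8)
The plan is to deduce the statement from two elementary ingredients: (i) the arithmetic mean of a finite multiset always lies within the range of its elements, and (ii) the piecewise-linear, convex shape of a hinge function, which interacts cleanly with Jensen's inequality. First I would dispose of the trivial case: if $\bar{x}_{j,s_1}=\bar{x}_{j,s_2}$ there is nothing to prove, since no hinge in $x_j$ can separate the two subgroups on average, so assume without loss of generality $\bar{x}_{j,s_1}<\bar{x}_{j,s_2}$ (the reverse ordering is symmetric and is what the ``or'' in the statement covers).

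Next I would produce the knot. Because $\bar{x}_{j,s_2}$ is an average of the observed values $x_{ij}$ over $i\in s_2$, not all of these values can be $\le\bar{x}_{j,s_1}$ --- otherwise the average would be $\le\bar{x}_{j,s_1}<\bar{x}_{j,s_2}$. Hence at least one observed value of $x_j$ exceeds $\bar{x}_{j,s_1}$, and since every observed value of $x_j$ is a candidate knot, I can take $k_{jl}\in\mathcal{K}_j$ to be the smallest candidate knot with $k_{jl}>\bar{x}_{j,s_1}$; the same mean-within-range reasoning applied to $s_2$ gives $k_{jl}\le\bar{x}_{j,s_2}$, with strict inequality whenever the data contain a value of $x_j$ strictly between the two subgroup means.

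With $k_{jl}$ fixed I would estimate the average of the hinge over each subgroup. For the right-truncated hinge $B(x)=[k_{jl}-x]_+$, write $\bar{B}_s=\frac{1}{n_s}\sum_{i\in s}[k_{jl}-x_{ij}]_+$; convexity of $u\mapsto[u]_+$ and Jensen's inequality give $\bar{B}_{s_1}\ge[k_{jl}-\bar{x}_{j,s_1}]_+=k_{jl}-\bar{x}_{j,s_1}>0$, so the hinge carries strictly positive average mass on subgroup $s_1$ --- its value is ``increased'' there. Since $\bar{x}_{j,s_2}\ge k_{jl}$, the bulk of the $x_j$-mass of $s_2$ sits at or above the knot, so $\bar{B}_{s_2}$ is comparatively small (identically zero in the degenerate case $k_{jl}=\bar{x}_{j,s_2}$, and dominated by $\bar{B}_{s_1}$ once one assumes the two subgroups are stochastically ordered in $x_j$, which also forces $\bar{B}_{s_1}\ge\bar{B}_{s_2}$ for every $k$). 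The mirror-image argument with the left-truncated hinge $[x-k_{jl}]_+$ gives a hinge whose average is increased on $s_2$ and decreased on $s_1$. Either way we have exhibited precisely the knot-and-hinge behavior that the proof of the Lemma invokes, so the proposition follows.

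The step I expect to be the main obstacle is the second one, because of the discreteness of $\mathcal{K}_j$: the literal two-sided strict inequality $\bar{x}_{j,s_1}<k_{jl}<\bar{x}_{j,s_2}$ need not be satisfiable --- both subgroup means can fall into a gap between consecutive observed values of $x_j$ (e.g.\ when each subgroup is concentrated on a single distinct value). The honest resolution is to weaken one inequality to non-strict, or to impose the mild regularity assumption that some observed value of $x_j$ lies between the subgroup means; likewise, turning the informal ``decreases it for subgroup $s_2$'' into a rigorous comparison really needs a stochastic-ordering (or at least a spread-control) hypothesis on $x_j$ across subgroups. What survives unconditionally --- and is the only ingredient the Lemma actually consumes --- is that the chosen hinge has strictly positive average value on the under-predicted subgroup; that part of the argument is robust.
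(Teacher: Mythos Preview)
Your approach is genuinely different from the paper's, and considerably more careful. The paper's proof proceeds by invoking the intermediate value theorem on the linear function $f(x)=\bar{x}_{j,s_1}-x$ to locate some $k_{jl}\in(\bar{x}_{j,s_1},\bar{x}_{j,s_2})$ with $f(k_{jl})<0$; this is vacuous, since every point of that open interval already satisfies the inequality. The paper then asserts, without further argument, that such a $k_{jl}$ lies in the discrete candidate set $\mathcal{K}_j$, and concludes by claiming that with $B(x)=[x-k_{jl}]_+$ ``subgroup $s_1$ will have positive values \dots\ while subgroup $s_2$ will have zero values'' --- which has the subgroups reversed, since $\bar{x}_{j,s_1}<k_{jl}<\bar{x}_{j,s_2}$ places $s_1$ below the knot, not above it.

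By contrast, you (i) actually produce a candidate knot in $\mathcal{K}_j$ via the mean-within-range argument, (ii) control the subgroup-averaged hinge through Jensen's inequality rather than by an (incorrect) pointwise claim, and (iii) explicitly flag the two places where the statement as written cannot be proved in full generality: the strict two-sided inequality can fail when no observed $x_j$-value lies between the subgroup means, and the ``decreases for $s_2$'' clause needs a stochastic-ordering hypothesis. These caveats are genuine, and the paper's proof simply elides them. What your argument buys is a correct identification of exactly what is provable unconditionally --- strict positivity of the hinge average on the under-predicted subgroup --- which, as you note, is the only piece the surrounding Lemma actually consumes.
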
 
\begin{proof} 
The proof is based on using the intermediate value theorem to find a knot between the mean values of $x_j$ for the two subgroups. The details are as follows:
Without loss of generality, assume that $\bar{x}_{j,s_1} < \bar{x}_{j,s_2}$. Let $f(x) = \bar{x}_{j,s_1} - x$. Then $f(\bar{x}_{j,s_1}) = 0$ and $f(\bar{x}_{j,s_2}) = \bar{x}_{j,s_1} - \bar{x}_{j,s_2} < 0$. Since $f(x)$ is continuous, by the intermediate value theorem, there exists a value $k_{jl} \in (\bar{x}_{j,s_1}, \bar{x}_{j,s_2})$ such that $f(k_{jl}) < 0$. This means that $\bar{x}_{j,s_1} < k_{jl} < \bar{x}_{j,s_2}$. Therefore, we can use the basis function $B(x) = [x - k_{jl}]_+$ to increase the value for subgroup $s_1$ and decrease the value for subgroup $s_2$. This is because subgroup $s_1$ will have positive values for this basis function (since their values are above $k_{jl}$), while subgroup $s_2$ will have zero values for this basis function (since their values are below $k_{jl}$).
\end{proof}
The proposition shows that we can change the shape and location of the MARS model by adding a pair of basis functions that involve a new or existing knot for each predictor variable. 
By choosing a knot that is between the mean values of $x_j$ for subgroup $s_1$ and subgroup $s_2$, we can increase the value of the basis function for subgroup $s_1$ and decrease the value of the basis function for subgroup $s_2$. This is because subgroup $s_1$ will have positive values for the basis function (since their values are above the knot), while subgroup $s_2$ will have zero values for the basis function (since their values are below the knot).\\
For example, suppose that we have a predictor variable $x_j$ with values ranging from 0 to 10, and two subgroups $s_1$ and $s_2$ with mean values of 3 and 7, respectively. If we choose a knot at 5, then we can use the basis function $B(x) = [x - 5]_+$ to increase the value for subgroup $s_1$ and decrease the value for subgroup $s_2$. This is because subgroup $s_1$ will have positive values for this basis function (since their values are above 5), while subgroup $s_2$ will have zero values for this basis function (since their values are below 5).
By increasing the value of the basis function for subgroup $s_1$ and decreasing the value of the basis function for subgroup $s_2$, we can change the predicted values for these subgroups. This means that the predicted values for subgroup $s_1$ will increase, and the predicted values for subgroup $s_2$ will decrease, resulting in a more balanced mean squared error ($\mathrm{MSE}$).
\subsection{Fairness-Aware Coefficient Estimation}
In \textit{fair}MARS, we enhance the loss function by incorporating weights that reflect the proportional distribution of observations within each protected value, transforming the MARS linear least-squares loss into a weighted least-squares regression \cite{agarwal2019fair}. The MARS model, developed by Friedman for $B$-spline fitting, is represented as:
\begin{equation}
\small
{g'}(\mathbf{x}) = \sum_{i=1}^{M-1} \beta_i B_i(\mathbf{x}) \text{+} \beta_M B_{m^*}(\mathbf{x}) x_v \text{+} \beta_{M\text{\tiny +}1} B_{m^*}(\mathbf{x})\left(x_v - k\right)_{\text{\tiny +}}
\end{equation}
The lack-of-fit (\textit{lof}) of $g'$ to the data is defined as:
\begin{equation}\label{eq:marsloss}
\small
\Delta[{g'}(\mathbf{x}), y_i]=\sum_{i=1}^{N} [{g'}({x_i})-y_i]^2
\end{equation}
MARS coefficients ($\beta_i$) are determined independently of response values ($y_1, \ldots, y_n$) and adjusted via a least-squares fitting procedure of $g'$ to the training dataset using basis function parameters.

For \textit{fair}coef estimation, we extend the loss function by \mbox{assigning} weights to observations based on $w_{i} = \frac{1}{\sigma_{i}}$, \mbox{reflecting} the proportion of each subgroup in the training data.
This transforms the MARS linear least-squares loss (equation \ref{eq:marsloss}) into a weighted least-squares regression:
\begin{equation}
 \small
\Delta[{g'}(\mathbf{x}), y_i]=\sum_{i=1}^{N} w_i[{g'}({x_i})-y_i]^2
\end{equation}
The \textit{fair}coef coefficients can be obtained through the \mbox{optimization} process:
\begin{equation}
\small
\left\{\hat{\beta}_j(\mathbf{x})\right\}_1^{M-1} = \underset{\left\{\beta_j\right\}_1^{M-1}}{\arg \min } \left(\sum_{i=1}^N w_i\left[{g'}({x_i})-y_i\right]^2\right)
\end{equation}
\section{Experiments}

\begin{table}[t]
    \centering
    \small
    \setlength{\tabcolsep}{1pt}
    \begin{tabular}{llccccc}
        \textbf{Dataset} & \textbf{Metric} & \textbf{MARS} & \textbf{\textit{fair}}\textbf{knot} & \textbf{DT} & \textbf{fairDT} & \textbf{fairLR} \\
        \hline
        ELS & $\text{$\mathrm{MSE}$}_{\text{test}}$ & 0.020 & 0.021 & 0.028 & 0.210 & 0.185 \\
        & $\text{Asian}$ & 0.005 & 0.004 & 0.006 & 0.063 & 0.103 \\
        & $\text{Black}$ & 0.016 & 0.016 & 0.006 & 0.029 & 0.062 \\
        & $\text{Hispanic}$ & 0.008 & 0.007 & 0.007 & 0.075 & 0.050 \\
        & $\text{Multiracial}$ & 0.022 & 0.023 & 0.040 & 0.009 & 0.134 \\
        & $\text{White}$ & 0.009 & 0.009 & 0.009 & 0.004 & 0.078 \\
        \hline
        Crime & $\text{$\mathrm{MSE}$}_{\text{test}}$ & 0.019 & 0.020 & 0.021 & 0.053 & 0.022 \\
        & $\text{Black}$ & 0.030 & 0.029 & 0.029 & 0.215 & 0.028 \\
        \hline
        Student & $\text{$\mathrm{MSE}$}_{\text{test}}$ & 3.666 & 3.574 & 10.62 & 14.127 & 4.631 \\
        Performance & $\text{Gender}$ & 3.665 & 3.487 & 6.715 & 5.571 & 3.233
    \end{tabular}
    \caption{$\mathrm{MSE}$ \& Subgroup Absolute Error Difference: \\ 10-fold Cross-Validation Across Models and Datasets}
    \label{tab:comparison}
\end{table}

\begin{table}[!ht]
\centering
\fontsize{9}{12}\selectfont
\setlength{\tabcolsep}{1pt}
\begin{tabular}{l|rrrrrrrr}
\diagbox{\textit{Metric}}{$\lambda$} & 0 \tiny(MARS) & 0.2 & 0.3 & 0.4 & 0.6 & 0.7 & 0.8 & 0.9 \\
\hline
$\text{$\mathrm{MSE}$}_{\text{test}}$ & 0.151 & 0.149 & 0.149 & 0.149 & 0.149 & 0.141 & 0.141 & 0.141 \\
$\text{$R^2$}_{\text{test}}$ & 0.687 & 0.691 & 0.691 & 0.691 & 0.691 & 0.708 & 0.708 & 0.708 \\
\hline
$\text{Asian}$ & 0.003 & 0.003 & 0.003 & 0.003 & 0.003 & 0.056 & 0.056 & 0.056 \\
$\text{Black}$ & 0.172 & 0.150 & 0.150 & 0.150 & 0.150 & 0.160 & 0.160 & 0.160 \\
$\text{Hispanic}$ & 0.017 & 0.012 & 0.012 & 0.012 & 0.012 & 0.012 & 0.012 & 0.012 \\
$\text{Multiracial}$ & 0.056 & 0.057 & 0.057 & 0.057 & 0.057 & 0.050 & 0.050 & 0.050 \\
$\text{White}$ & 0.058 & 0.048 & 0.048 & 0.048 & 0.048 & 0.031 & 0.031 & 0.031 \\
\end{tabular}
\caption{$\mathrm{MSE}$, $R^2$, \& Subgroup Absolute Error Difference: Single Fold Implementation of \textit{fair}knot on ELS Dataset}
\label{tab:knot1}
\end{table}

\begin{table*}[!ht]
    \centering
    \fontsize{9}{12}\selectfont
    \setlength{\tabcolsep}{1pt}
    \begin{tabular}{cccccc}
        \textbf{\textit{fair}\text{coef}} & \textbf{\text{MARS}} & & \textbf{\textit{fair}\text{knot \& }\textit{fair}{coef}} & \textbf{\textit{fair}\text{knot}} & \\
        \hline
        \textbf{\textit{coef}} & \textbf{\textit{coef}} & {\textbf{Basis Function}} & \textbf{\textit{coef}} & \textbf{\textit{coef}} & {\textbf{Basis Function}} \\
        \hline
        0.20 & 0.18 & \textit{Intercept} & 0.67 & 0.69 & \textit{Intercept} \\
        \hline
        0.53 & 0.51 & F3\_GPA(first year) & 0.58 & 0.53 & h(F3\_GPA(first year) \\
        \hline
        0.06 & 0.10 & 9\_12\_GPA & 0.07 & 0.11 & 9\_12\_GPA \\
        \hline
        \text{\tiny\textit{pruned}} & \text{\tiny\textit{pruned}} & h(Std Math/Reading-49.86) & \text{\tiny\textit{pruned}} & \text{\tiny\textit{pruned}} & h(Std Math/Reading-49.08) \\
        \hline
        -0.03 & -0.02 & h(49.86-Std Math/Reading) & \text{\tiny\textit{pruned}} & \text{\tiny\textit{pruned}} & h(49.08-Std Math/Reading) \\
        \hline
        0.11 & 0.09 & Ins attended  & 0.11 & 0.09 & Ins attended \\
        \hline
       0.12 & 0.11 & gender\_Female & 0.12 & 0.09 & gender\_Female\\
        \hline
         0.01  & 0.02 & Parents education & \text{\tiny\textit{pruned}} & \text{\tiny\textit{pruned}} & h(F1 Std Math-52.65) \\
        \hline
        -0.18 & -0.16 & Black  & \text{\tiny\textit{pruned}} & \text{\tiny\textit{pruned}} & h(52.65-F1 Std Math) \\
        \hline
        0.19 & 0.15 & h(\%Hispanic teacher-93) & 0.02 & 0.02 & Parents education  \\
        \hline
        \text{\tiny\textit{pruned}} & \text{\tiny\textit{pruned}} & h(93-\%Hispanic teacher) & \text{\tiny\textit{pruned}} & \text{\tiny\textit{pruned}} & h(Std Math/Reading-59.75)  \\
        \hline
        0.09 & 0.06  & Generation & -0.02  & -0.01  & h(59.75-Std Math/Reading) \\
        \hline
        0.01 & 0.01 & credits(first year) & & & \\
        \hline
        \text{\tiny\textit{pruned}} & \text{\tiny\textit{pruned}} & Marital\_Par\_Single   & & & \\
        \hline
        \text{\tiny\textit{pruned}} & \text{\tiny\textit{pruned}} & Marital\_Par\_Married    & & & \\
        \hline
        0.01 & 0.01 & F1\_Std\_Math & & & \\
        \hline
        \text{\tiny\textit{pruned}} & \text{\tiny\textit{pruned}} & \%Indian teacher  & & & \\
    \end{tabular}
    \caption{Comparison of MARS, \textit{fair}knot, \textit{fair}coef, and \textit{fair}knot\&\textit{fair}coef}
    \label{tab:comparison_table3}
\end{table*}



\paragraph{Datasets:} Our evaluations employed diverse real-world datasets to comprehensively assess the performance and fairness attributes of the proposed \textit{fair}MARS model. Dataset details are outlined as follows: \textbf{ELS Dataset} \cite{bozick2007education}: Comprising 1186 instances and featuring 40 predictor variables, we consider ``race'' as a sensitive attribute with 5 distinct categories and ``GPA'' as the response variable. \textbf{Crime Dataset} \cite{uci_ml_repository}: Consisting of 1994 instances and 128 predictor variables, we consider ``race'' as a sensitive attribute, classified into 2 categories: 1 representing a predominantly black community, and 0 denoting others. The response variable is ``crimes per capita''. The analysis included the top 11 important \mbox{features} using random forest feature importance scores. \textbf{Student Performance Dataset} \cite{student_performance_dataset}: Comprising 395 instances and 33 predictor variables, we consider ``gender'' as a sensitive attribute, classified with a binary distinction. The response variable is the final grade.

\paragraph{Evaluated Algorithms:} \textbf{MARS}: The original MARS algorithm without fairness enhancements \cite{friedman1991multivariate}. \textbf{\textit{fair}MARS}: In our analysis, we will assess the individual components of \textit{fair}MARS, namely \textit{fair}knot and \textit{fair}coef, both independently and in combination. This evaluation will be conducted in comparison to the MARS algorithm. We will further compare the performance of \textit{fair}knot against the following algorithms: \textbf{MIP-DT fair regression tree} \cite{Aghaei_Azizi_Vayanos_2019}: This approach integrates a \mbox{regularization} term into the Mixed-Integer Programming objective function to address fairness (disparate impact and treatment) in regression tasks. In this paper, we will denote this method as ``fairDT''. This approach aims to balance accuracy and fairness, albeit with higher computation times.
\textbf{MIP regression tree (CART)}: a classic method for constructing decision trees in regression problems. This method does not consider any fairness criteria. We will denote this method as ``DT.''
\textbf{Fair Regression} \cite{agarwal2019fair}: This approach leverages bounded group loss in \mbox{regression} tasks. Employing Linear Regression as its \mbox{estimator}, it aims to minimize empirical error while reducing unfair disparities among distinct groups. We will denote this method as ``fairLR.''

\paragraph{Implementation:} We implemented \textit{fair}MARS by extending the Py-earth package \cite{rudy2016py}, a Python implementation of \citet{friedman1991multivariate}'s MARS algorithm following the Scikit-learn design. Py-earth uses Cython to optimize its implementation and supports Scikit-learn's interfaces. Our extended version preserves the original capabilities of the Py-earth package, including efficient handling of large datasets and fast processing, while incorporating additional components to embed fairness considerations.
\textit{fair}MARS resources, including code, instructions, datasets and variable descriptions, can be found on GitHub\footnote{https://github.com/parianh/fairMARS} \cite{parian_haghighat_2024_10595371}.
For the fairDT, we utilized \citet{Aghaei_Azizi_Vayanos_2019}'s MIP-DT code, employing Gurobi 10.0.2 on a computer node with 18 CPUs and 256 GB of RAM. For the DT method, we used the MIP-DT code, setting the fairness penalty value to 0 to deactivate the fairness component. For the fairLR approach, we used the GridSearchReduction method from the aif360 package, employing linear regression as the estimator and squared loss for optimization.
 
\subsection{Fairness-Accuracy Trade-off Analysis} Exploring the interplay between fairness and accuracy, we conducted evaluations using $\lambda \in \{0.2, 0.4, 0.6, 0.8\}$ within both \textit{fair}MARS and fairDT models. Table \ref{tab:comparison} provides insights into $\mathrm{MSE}$ and subgroup disparity across datasets for these models. We selected the optimal $\lambda$ for each model to minimize discrimination. Notably, \textit{fair}MARS demonstrated remarkable efficiency, with average fold times of about 3, 2, and 0.5 minutes for the ELS, crime, and student performance datasets, respectively. In contrast, the fairDT model took 3, 2, and 1 hour per fold to optimize on the ELS, crime, and student performance datasets, respectively.


\begin{figure}[!ht]
    \centering
    \includegraphics[width=1\linewidth]{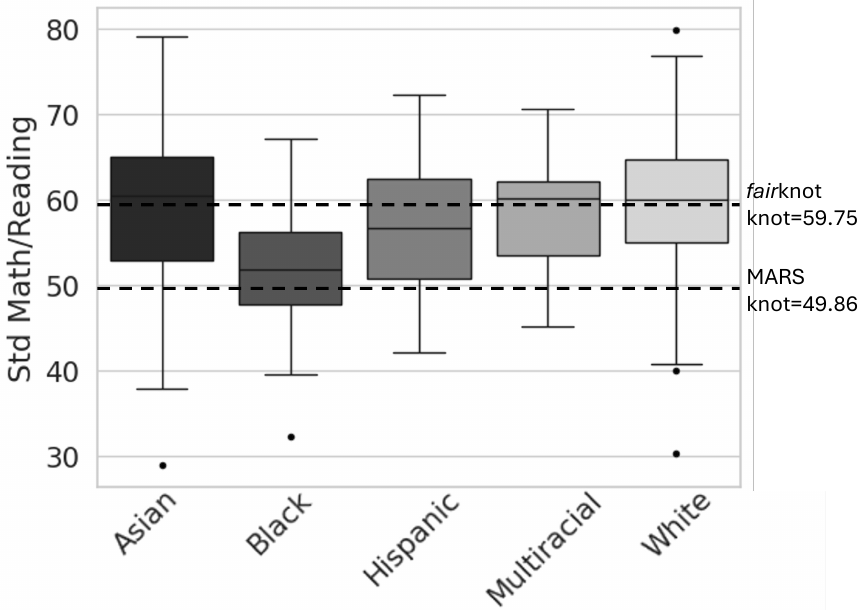}
    \caption{Knot Selection \textit{fair}knot vs. MARS}
    \label{fig:arrangement}
\end{figure}

\subsection{Fair Knot Selection vs. Fair Coefficient Estimation} In this section, we present a practical case study where we apply MARS and our proposed \textit{fair}MARS with different subroutines for fairness considerations on a single fold of the ELS dataset. We investigate \textit{fair}MARS with three subroutines: using \textit{fair}knot, \textit{fair}coef, and a combination of both \textit{fair}knot and \textit{fair}coef. The goal is to compare the resulting basis functions and coefficients across these models and analyze their effects on fairness and accuracy.
In Table \ref{tab:comparison_table3}, we can observe that \textit{fair}coef primarily impacts the coefficients, aiming to enforce fairness while keeping the basis functions unchanged.
On the other hand, \textit{fair}knot leads to different sets of basis functions and knot positions. Some other observations from this comparison are:\\
\noindent \textbf{Balancing Fairness and Accuracy:} MARS is a flexible and adaptive technique that can adjust its basis functions according to the data distribution \cite{friedman1991multivariate}. While being sensitive to data distribution, MARS maintains its accuracy due to its inherent generalized cross-validation mechanism. This adaptability of MARS proves to be a remarkable advantage in the context of \textit{fair}MARS, where the interplay between fairness and accuracy does not necessarily lead to a trade-off. This intriguing observation is supported by the results presented in Table \ref{tab:knot1}, where the $\mathrm{MSE}$ and $R^2$ metrics remain not only consistent but occasionally even exhibit improvement.\\
\indent It is also important to emphasize that \textit{fair}knot introduces localized corrections to reduce the error differences between subgroups, such as white and black students, choosing different knot locations for each subgroup. This makes the regression model more fair and less biased towards any subgroup. However, this does not guarantee that the error of each subgroup will always go down. Depending on the data and the $\lambda$ value, some subgroups may have the same or slightly higher error when we use \textit{fair}knot, while others may have lower error.
For instance, in Table \ref{tab:knot1}, the disparity for the Asian subgroup remains stable or slightly worsens with the integration of \textit{fair}knot, while the disparities of other subgroups show improvement. This is because \textit{fair}knot balances fairness and accuracy, and there is sometimes a trade-off between them.
\\
\textbf{\textit{fair}knot Impact on Basis Functions:} Table \ref{tab:comparison_table3} showcases a specific instance in which \textit{fair}knot modifies the knot location of a basis function: ``Std Math/Reading''. This continuous predictor exhibits varying distributions across different subgroups (as seen in Figure \ref{fig:arrangement}). \textit{fair}knot adjusts the knot position closer to the average distribution of all subgroups. This demonstrates that \textit{fair}knot can adapt the basis functions to the data characteristics of each subgroup, ultimately enhancing fairness. Table 5 presents a comparative analysis of the results obtained using MARS and \textit{fair}MARS on higher-degree (2nd-degree polynomial) basis function and coefficient estimates. The MARS results reveal 9 higher-order \mbox{interactions}, while \textit{fair}MARS reduces this count to 5 \mbox{interactions}. We also observe that the \textit{fair}MARS \mbox{maintains} the selection of the most important features in the ELS dataset [9\_12\_GPA, F3\_GPA(fist year), credits(first year), Std Math/Reading].\\
\begin{table}[!ht]
    \centering
    \fontsize{9}{12}\selectfont
    \setlength{\tabcolsep}{1pt}
    \begin{tabular}{c|c}
        \multicolumn{2}{c}{\textbf{\text{MARS}}} \\
        \hline
        \textbf{\textit{coef}} & \textbf{Basis Function} \\
        \hline
        -0.64 & (Intercept) \\
        \hline
        0.64 & F3\_GPA(first year) \\
        \hline
        \text{\tiny\textit{pruned}} & 9\_12\_GPA \\
        \hline
        -0.01 & h(Std Math/Reading-49.86)*9\_12\_GPA \\
        \hline
        \text{\tiny\textit{pruned}} & h(49.86-Std Math/Reading)*9\_12\_GPA \\
        \hline
        0.01 & credits(first year)*F3\_GPA(first year) \\
        \hline
        -0.01 & credits(first year)*9\_12\_GPA \\
        \hline
        0.22 & Ins\_attended \\
        \hline
        0.01 & \%white teacher \\
        \hline
        -0.01 & \%white teacher*F3\_GPA(first year) \\
        \hline
        \text{\tiny\textit{pruned}} & gender\_Female \\
        \hline
        0.01 & Parents education*\%white teacher \\
        \hline
        \text{\tiny\textit{pruned}} & h(Std Math/Reading-52.23)*gender\_Female \\
        \hline
        \text{\tiny\textit{pruned}} & h(52.23-Std Math/Reading)*gender\_Female \\
        \hline
        -0.01 & credits(first year)*Ins\_attended \\
        \hline
        0.01 & \%Hispanic\_NR teacher*\%white teacher \\
        \hline
        \text{\tiny\textit{pruned}} & \%Hispanic\_NR teacher*F3\_GPA(first year) \\
        \hline
        0.01 & Std Math/Reading*9\_12\_GPA \\
        \hline
        \text{\tiny\textit{pruned}} & h(Std Math/Reading-68.48)*9\_12\_GPA \\
        \hline
        \text{\tiny\textit{pruned}} & h(68.48-Std Math/Reading)*9\_12\_GPA \\
        \hline
        \text{\tiny\textit{pruned}} & Marital\_Par\_Single\_chld*Ins\_attended \\
        \hline
        -0.02 & Parents education*F3\_GPA(first year) \\
        \hline
        0.03 & English*F3\_GPA(first year) \\
        \hline
        \text{\tiny\textit{pruned}} & English*9\_12\_GPA \\
        \hline
        0.00 & \%Black teacher*gender\_Female \\
        \hline
        0.16 & Hispanic \\
        \hline
        \text{\tiny\textit{pruned}} & \%Hawaiian teacher*F3\_GPA(first year)\\
         \hline
        \text{\tiny\textit{pruned}} & \%Indian teacher*\%white teacher\\
        \hline
        \text{\tiny\textit{pruned}} & White*F3\_GPA(first year) \\
        \\
        \textbf{\textit{}} & \textbf{\textit{fair}knot} \\
        \hline
        \textbf{\textit{coef}} & \textbf{Basis Function} \\
        \hline
        0.70 & (Intercept) \\
        \hline
        0.33 & F3\_GPA(first year) \\
        \hline
        0.24 & 9\_12\_GPA \\
        \hline
        0.00 & h(Std Math/Reading-49.08)\\
        \hline
        -0.03 & h(49.08-Std Math/Reading)\\
        \hline
        0.01 & credits(first year)*F3\_GPA(first year) \\
        \hline
        -0.01 & credits(first year)*9\_12\_GPA \\
        \hline
        \text{\tiny\textit{pruned}} & h(Std Math/Reading-59.75)*h(Std Math/Reading-49.08)\\ 
        \hline
        \text{\tiny\textit{pruned}} & h(59.75-Std Math/Reading)*h(Std Math/Reading-49.08)\\
    \end{tabular}
    \caption{Comparative Analysis of Non-Linear Modeling (2nd Degree Polynomial) of MARS and \textit{fair}knot.}
    \label{tab:coefficients_table}
\end{table}
\\
These results demonstrate that our proposed \textit{fair}MARS approach can balance fairness and accuracy in regression modeling by using different combinations of fair knot selection and fair coefficient estimation. Depending on the application domain and the desired trade-off, users can choose the appropriate scenario for their regression tasks.

\section{Conclusion}
In this study, we introduced the \textit{fair}MARS algorithm, which incorporates fairness metrics into the learning process. Our primary objective was to address disparities among sensitive subgroups by optimizing knots, a procedure supported by both theoretical justification and empirical validation. The \textit{fair}MARS algorithm not only mitigates disparity but also produces interpretable decision rules by \mbox{deriving} \mbox{optimal} and fair variable splitting criteria. The algorithm's high computational speed makes it particularly advantageous for practitioners seeking efficient and equitable predictive modeling solutions in a regression setting.



\section{Acknowledgements}

The authors extend sincere appreciation for the generous support provided by the U.S. Department of Education's Institute of Education Sciences.
Parian Haghighat, Hadis Anahideh, and Denisa G\'andara are supported by the Institute of Education Sciences
R305D220055 grant. Lulu Kang is supported in part by NSF DMS 2153029 and DMS 1916467 grants.

\bigskip

\bibliography{aaai24.bib}

\end{document}